\documentclass{article}

\usepackage{arxiv}

\usepackage[utf8]{inputenc} 
\usepackage[T1]{fontenc}    
\usepackage[colorlinks,citecolor=blue,urlcolor=blue,linkcolor=magenta]{hyperref}
\usepackage{url}            
\usepackage{booktabs}       
\usepackage{amsfonts}       
\usepackage{nicefrac}       
\usepackage{microtype}      
\usepackage{natbib}
\usepackage{graphicx}
\usepackage{amsmath}
\usepackage{amssymb}
\usepackage{amsthm}
\usepackage{algorithm}
\usepackage{algorithmic}
\usepackage{subcaption}

\usepackage{multirow}

\usepackage{color}
\usepackage{mathrsfs}
\graphicspath{ {./images/} }
\usepackage{mathtools}

\newcommand{\X}{\mathbf{X}}
\newcommand{\lrc}[1]{\left\{#1 \right\}}
\renewcommand*{\P}{\mathbb{P}}

\newcommand{\black}[1]{\textcolor{black}{#1}}

\newtheorem{definition}{Definition}[section]
\newtheorem{theorem}{Theorem}[section]
\newtheorem{example}{Example}[section]
\newtheorem{lemma}{Lemma}[section]
\newtheorem{remark}{Remark}[section]

\title{SLDP: Semi-Local Differential Privacy for Density-Adaptive Analytics}

\author{
Alexey Kroshnin \\
 Weierstrass Institute for\\ Applied Analysis and Stochastic\\
  \texttt{kroshnin@wias-berlin.de} \\
   \And
 Alexandra Suvorikova \\
  Weierstrass Institute for\\ Applied Analysis and Stochastic\\
  \texttt{suvorikova@wias-berlin.de} \\
}

\begin{document}
\maketitle
\begin{abstract}
Density-adaptive domain discretization is essential for high-utility privacy-preserving analytics but remains challenging under Local Differential Privacy (LDP) due to the privacy-budget costs associated with iterative refinement. We propose a novel framework---Semi-Local Differential Privacy (SLDP)---that assigns a privacy region to each user based on local density and defines adjacency by the potential movement of a point within its privacy region. We present an interactive 
$(\varepsilon, \delta)$-SLDP protocol, orchestrated by an honest-but-curious server over a public channel, to estimate these regions privately. Crucially, our framework decouples the privacy cost from the number of refinement iterations, allowing for high-resolution grids without additional privacy budget cost. We experimentally demonstrate the framework's effectiveness on estimation tasks across synthetic and real-world datasets.
\end{abstract}

%
\keywords{Differential Privacy \and Local Differential Privacy \and Semi-local Differential Privacy\and  interactive protocol \and public communication channel}

\section{Introduction}
Many privacy-sensitive tasks---from spatial range queries \cite{tire2024answering,  alptekin2025hierarchical} and density estimation \cite{butucea2020local, kroll2021on} to classification \cite{berrett2019classification, fletcher2019decision} and hypothesis testing \cite{narayanan2022private, lamweil2022minimax}---require capturing data geometry. Specifically, dense regions benefit from fine discretization, whereas sparse regions necessitate aggregation to balance utility and privacy. The Differential Privacy (DP) \cite{dwork2006differential} framework protects individuals by ensuring that the analysis output remains stable even when a single record is modified. Centralized algorithms achieve high utility by accessing raw data to capture geometry \cite{zeighami2021neural,liu2024differentially, zhang2016privtree}, but they necessitate sharing unperturbed data with a third party (trusted curator).

To avoid a trusted curator, Local Differential Privacy (LDP) lets each user randomize their data on-device before collection \cite{kasiviswanathan2011can, xiong2020comprehensive}. This eliminates centralized access to raw data, but often incurs a loss in utility \cite{kairouz2014extremal}.
Interactive LDP mechanism \cite{joseph2019role} can adapt to data geomrtry over multiple rounds, but requires additional communication and, when users contribute repeatedly, allocating the privacy budget across rounds via composition \cite{kairouz2015composition}; see, e.g., \cite{alptekin2025hierarchical, balioglu2024grid, neto2026alog}.

While metric notions such as geo-indistinguishability (geo-DP) incorporate the geometry of the input domain, they typically use a fixed metric and a global privacy scale \cite{andres2013geo, chatzikokolakis2013broadening}.
However, standard Geo-DP mechanisms apply isotropic noise independent of the data distribution.

Together, these observations raise a natural question: \emph{can we obtain a trusted-curator-free mechanism that adapts to local data geometry and improves utility over LDP algorithms?}
We answer this in a new framework, \emph{Semi-Local Differential Privacy (SLDP)}. 

SLDP restricts the privacy guarantee by constraining the user’s indistinguishability set to a data-dependent privacy region. To implement this, we propose an interactive protocol that privately constructs a $k$-anonymous partition of the domain through iterative refinement. Under this framework, a user’s privacy region is defined as the specific leaf cell containing their record. By construction, our protocol ensures each leaf cell maintains a population of at least $k$ users, enabling a density-aware discretization that adapts to local data geometry while operating entirely on locally randomized signals. To visualize the mechanism's behavior, Figure~\ref{fig:density_reconstruction} displays the function value release under both SLDP and standard LDP. Our contributions are as follows.
\begin{itemize}
\item \textbf{Novel adjacency framework based on privacy regions.} We introduce SLDP, an intermediate privacy framework between central and local DP, where adjacency is defined by allowing a user’s record to move only within a data-dependent $k$-anonymous privacy region.

\item  \textbf{Interactive protocol to discover privacy regions.} We propose an interactive server–user protocol that privately estimates the data-dependent partition using only locally randomized membership signals over a public channel (honest-but-curious server model).

\item \textbf{Privacy guarantees.} We prove that our protocol satisfies $(\varepsilon, \delta)$-DP under the adjacency relation induced by the (implicit) canonical partition.

\item \textbf{Iteration-independent privacy cost.} A key result is that the cumulative privacy loss does not scale with the number of refinement iterations.
    
\item \textbf{Experimental results.} We demonstrate improved utility over standard LDP (and competitive performance vs central DP) on mean estimation, classification, and spatial range-query tasks on synthetic and real-world datasets.
\end{itemize}
\begin{figure}[!t]
    \centering
    \includegraphics[width=0.95\textwidth]{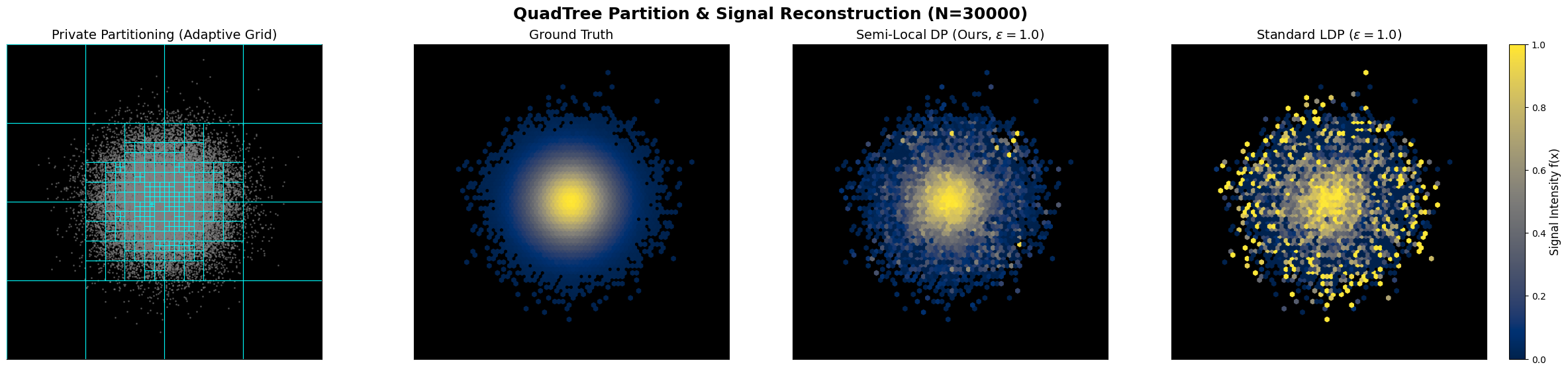}
    \caption{\textbf{SLDP function-value release ($N=3\cdot 10^4$ points).} \textit{From left to right:} the private data-adaptive quadtree partition $\hat{\mathcal{F}}_{\mathcal{P}}(\mathbf{X})$ produced by SLDP (left); ground-truth signal $f(x)=\exp(-\|x\|^2/(2\sigma^2))$ evaluated at the samples; noisy function-value reports under our SLDP with total privacy budget $\varepsilon=1.0$ (and $\delta=0.05$ for SLDP); standard LDP with $\varepsilon=1.0$.}
    \label{fig:density_reconstruction}
\end{figure}

\paragraph{Organization of the paper.} We begin by reviewing the necessary background in Section~\ref{sec:background}. Section~\ref{sec:methodology} details our proposed method, followed by its theoretical analysis in Section~\ref{sec:theory}. Finally, Section~\ref{sec:experiments} presents the experimental evaluation.

\section{Background}
\label{sec:background}
Assume $N$ users with features embedded in some measurable space $\mathcal{X}$, i.e., $X_i \in \mathcal{X}$ describes the $i$-th user. Denote $\X := (X_1,\dots,X_N) \in \mathcal{X}^N$. In the DP setting, a trusted curator collects users' data, aggregates it, and releases the result via a randomized mechanism $\mathscr{M} \colon \mathcal{X}^N \to \mathcal{A}$, where $\mathcal{A}$ is a measurable space of outputs \cite{adnan2022federated, el2022differential, ponomareva2023dp}.  $\mathscr{M}$ satisfies $(\varepsilon,\delta)$-DP guarantee if, for all neighboring datasets $\X$, $\X' \in \mathcal{X}^{N}$ that differ in exactly one individual's data (replacement adjacency), and for all measurable $S \subseteq \mathcal{A}$,
\begin{equation}
\label{def:DP}
\P\lrc{\mathscr{M}(\X) \in S} 
\le e^{\varepsilon} \P\lrc{\mathscr{M}(\X') \in S} + \delta .
\end{equation} 
A canonical example is the DP mean estimator. 
\begin{example}[DP mean estimator]
\label{example:DP}
 Let $f:~\mathcal{X}~\mapsto~[0, C]$ be a bounded function. The goal is to privatly estimate the true mean $\mu := \frac{1}{N}\sum_{i}f(X_i)$. The global sensitivity of the mean is
\[
\Delta_{\mu} = \frac{1}{N} \max_{x, x' \in \mathcal{X}} |f(x) - f(x')| \le \frac{C}{N}.
\]
Let $\xi \sim \mathrm{Lap}\left(0, \frac{C}{N\varepsilon} \right)$. The DP estimator is 
\[
    \hat{\mu}_{\mathrm{DP}}(\X) := \mathscr{M}(\X) = \frac{1}{N}\sum^N_{i=1} f(X_i) + \xi.
\]
\end{example}
While the centralized DP model achieves high statistical utility, its efficiency in spatial tasks often depends on domain partitioning. A notable approach in this direction is PrivTree \cite{zhang2016privtree}, which utilizes a recursive splitting strategy to refine the grid based on noisy data counts. However, PrivTree assumes the presence of a trusted curator.

LDP is a decentralized variant of DP where each user perturbs their data locally before submission. Specifically, a randomized mechanism $\mathscr{M}\colon \mathcal{X} \to \mathcal{A}$ satisfies $\varepsilon$-LDP if 
for all pairs of inputs $x, x' \in \mathcal{X}$ and for any measurable $S \subseteq \mathcal{A}$
\begin{equation}
\label{def:LDP}
\mathbb{P}\left\{\mathscr{M}(x) \in S\right\} \le e^{\varepsilon} \cdot \mathbb{P}\left\{\mathscr{M}(x') \in S\right\}.
\end{equation}
In the LDP setting, each user $i \in [N]$ independently applies an instance of the mechanism, denoted as $\mathscr{M}_i(X_i)$. 
\begin{example}[LDP mean estimator]
\label{example:LDP}
Each user computes $\mathscr{M}_i(X_i) := f(X_i) +\xi_i$, where $\xi_i \sim \mathrm{Lap}(0, \frac{C }{\varepsilon})$ are independent. The curator aggregates the global estimate:
\[
\hat{\mu}_{\mathrm{LDP}}(\X) = \frac{1}{N}\sum^N_{i=1}\mathscr{M}_i(X_i) = \frac{1}{N}\sum^N_{i=1}\left( f(X_i) + \xi_i\right).
\] 
\end{example}

Note that in the LDP setting, the noise variance scales as $O(1/(N\varepsilon^2))$, while in DP it is $O(1/(N\varepsilon)^2)$ (see Example~\ref{example:DP}), leading to a utility gap. To mitigate this gap, the Shuffle Model was introduced \cite{bittau2017prochlo, erlingsson2019amplification}. It introduces an intermediate step where a trusted shuffler randomly permutes the users' reports before they reach the curator. Effectively, the Shuffle Model brings the utility closer to the centralized setting without requiring a trusted curator.

The LDP setting described above typically assumes a non-interactive protocol. However, this model admits rich interaction patterns \cite{joseph2019role}. Interactivity can yield provable advantages over fully non-interactive mechanisms \cite{joseph2019role}. For instance, adaptive hierarchical refinement has been used to improve spatial query accuracy under LDP \cite{du2021ahead, alptekin2025hierarchical}. However, fully interactive protocols must allocate the total privacy budget across rounds, often shrinking the per-round budget and affecting utility \cite{joseph2019role}.

Another line of work improves utility for spatial data by tailoring the privacy definition to the geometry of the domain. 
In particular, geo-indistinguishability (Geo-DP) \cite{andres2013geo}. Let $(\mathcal{X}, d)$ be a metric space. A mechanism $\mathscr{M}\colon \mathcal{X} \to \mathcal{A}$ satisfies $\varepsilon$-Geo-DP if for any pair of inputs $x, x' \in \mathcal{X}$ and any measurable set $S \subset \mathcal{A}$,
\begin{equation}
\label{def:geo_DP}
\P\lrc{\mathscr{M}(x) \in S} 
\le e^{\varepsilon \cdot d(x, x')} \P\lrc{\mathscr{M}(x') \in S}.
\end{equation}
However, while Geo-DP improves utility for metric-based data by relaxing the users' indistinguishability requirement for distant points, it still lacks distributional adaptivity.

\section{Methodology}

\label{sec:methodology}
We propose a density-aware partitioning strategy that adapts to local geometry by selectively refining a fixed, data-independent hierarchy of cells. Specifically, we fix a canonical hierarchical partitioning scheme $\mathcal{P}$ of $\mathcal{X}$ chosen \emph{a priori} (e.g., a depth-$T$ quadtree over $[0,1]^d$), which specifies the admissible splits and the set of candidate cells at all resolutions.

Given a dataset $\X$, we derive a data-dependent leaf partition $\mathcal{F}^*(\mathbf{X}) := \{F^*_1, \dots, F^*_m\}$ by recursively refining cells according to $\mathcal{P}$. To capture local density, we pursue maximal refinement subject to a $k$-anonymity constraint. Specifically, a cell $F_j$ is split if and only if all its children $F^{\mathrm{ch(j)}}_1, \dots F^{\mathrm{ch(j)}}_q$ contain at least $k$ points. Consequently, the refinement stops when a split would result in any child having fewer than $k$ samples
\[
\exists ~F^{\mathrm{ch(j)}}_i:\quad |F^{\mathrm{ch(j)}}_i \cap \mathbf{X}| < k. 
\]
This stopping rule ensures the $k$-anonymity of the resulting partition $\mathcal{F}^*(\X)$,
\[
\forall ~F^*\in\mathcal{F}^*(\mathbf{X}):\quad |F^*\cap \mathbf{X}|\ge k.
\]
By construction, the collection $\mathcal{F}^*(\mathbf{X})$ constitutes a complete partition of $\mathcal{X}$, $\bigsqcup_{j=1}^m F^*_j = \mathcal{X}$. For each individual $i$, we define its privacy region $U^*_i(\mathbf{X})$ as the unique cell $F^*_j \in \mathcal{F}^*(\mathbf{X})$ that contains the corresponding feature $X_i$. 

In standard non-metric DP, which is geometry-agnostic, the privacy region extends to the entire domain $\mathcal{X}$. In Geo-DP, the privacy region is a ball of fixed radius. Our framework employs an adaptive neighborhood whose volume decreases as local density grows (see Fig.~\ref{fig:density_reconstruction}). 

This defines a privacy-region-based adjacency: two datasets are considered neighbors if they differ only by the location of a single user within the same privacy region.
\begin{definition}[Local Neighbors]
\label{def:neighboring_datasets}
Let $\mathcal{F}^*(\mathbf{X})$ be the canonical partition of dataset $\mathbf{X}$. We define the set of neighboring datasets $\mathcal{N}_{F}(\mathbf{X})$ as:
\begin{equation}
\label{eq:neighbors}
\begin{split}
    \mathcal{N}_{F}(\X) := \biggl\{ & (\X, \X') \in \mathcal{X}^N \times \mathcal{X}^N \;\bigg|\; 
    \exists i \in [N]: X_j = X'_j,~\forall j \neq i~\text{and}~ 
        X'_i \in U^*_i(\X) 
    \biggr\}.
\end{split}
\end{equation}
\end{definition}

\subsection{Discovery of Privacy Regions}
\label{sec:discovery}
The canonical privacy regions $U_1^*(\mathbf{X}), \dots, U_N^*(\mathbf{X})$ are induced by the data-dependent partition $\mathcal{F}^*(\mathbf{X})$.
In practice, however, $\mathcal{F}^{*}(\mathbf{X})$ cannot be released as is, since its boundaries directly encode sensitive information about the underlying data density.
To address this, we first release a private estimate $\hat{\mathcal{F}}(\mathbf{X})$ of the partition and subsequently execute the main mechanism conditional on $\hat{\mathcal{F}}(\mathbf{X})$

To construct the private partition $\hat{\mathcal{F}}(\mathbf{X})$, we consider an honest curator operating over an untrusted communication channel. Specifically, we assume an honest-but-curious server that follows the protocol and observes the entire execution transcript. We consider an open communication channel: all messages (user reports and server broadcasts) are available for an external observer. 

The curator coordinates the recursive refinement while ensuring that each cell maintains $k$-anonymity based on noisy observations. The process begins with a coarse initial grid over $\mathcal{X}$, which is iteratively refined according to a predefined hierarchical partitioning scheme $\mathcal{P}$. In each step, the curator broadcasts a refinement query for all cells. Users respond with locally-noised membership indicators. By aggregating these noisy responses, the curator estimates the count of samples in each candidate child cell. The refinement proceeds as long as every child cell is estimated to contain at least $k$ samples.

This communication protocol offers a compelling middle ground within the privacy spectrum. While it satisfies the formal requirements of DP (see Theorem~\ref{thm:DP}), the use of local randomization by users ensures a more robust trust model than standard Central DP, as it removes the need for a fully trusted curator. However, because the protocol is interactive and requires central coordination, it does not meet the stricter conditions of Local Differential Privacy (LDP). This ``intermediate'' position allows us to introduce a formal privacy framework, which we define as Semi-Local Differential Privacy (SLDP).

\begin{definition}[Semi-Local Differential Privacy (SLDP)]
\label{def:SLDP}
    Let $\mathscr{M}$ be an interactive protocol between a server and $N$ users. Let $\X \in \mathcal{X}^N$ be the users' private data. We denote as $O(\X)$ the \emph{random transcript} of all messages received by the server during the execution of the communication protocol $\mathscr{M}$ on $\X$.
    
    We say that $\mathscr{M}$ satisfies $(\varepsilon, \delta)$-SLDP if for any dataset $\mathbf{X}$ and any neighbor $\mathbf{X}'$ such that $(\mathbf{X}, \mathbf{X}') \in \mathcal{N}_{F}(\mathbf{X})$, for all measurable sets of transcripts $S \subseteq \mathcal{A}$ holds 
    \begin{equation}
        \label{eq:sldp_definition}
        \mathbb{P}\left\{O(\X) \in S\right\} \le e^\varepsilon \cdot \mathbb{P}\left\{O(\X') \in S\right\} + \delta.
    \end{equation}

\end{definition}
\begin{example}[SLDP mean estimator]
\label{example:SLDP}
Each user knows their privacy region $\hat{U}_i(\X)$ and computes $\mathscr{M}_i(X_i) := f(X_i) +\xi_i$, where $\xi_i \sim \mathrm{Lap}(0, \frac{\Delta_{i, f}}{\varepsilon})$ and 
\[
\Delta_{i, f} := \max_{x, x' \in \hat{U}(\X_i)} |f(x) - f(x')|.
\]
The curator aggregates the global estimate:
\[
\hat{\mu}_{\mathrm{SLDP}}(\X) = \frac{1}{N}\sum^N_{i=1}\mathscr{M}_i(X_i) = \frac{1}{N}\sum^N_{i=1}\left( f(X_i) + \xi_i\right).
\] 
\end{example}

We conclude this section by detailing the two-party protocol for constructing $\hat{\mathcal{F}}(\mathbf{X})$. The procedures for the client and the server are outlined in Algorithm~\ref{alg:client-side} and Algorithm~\ref{alg:server-side}, respectively.

\begin{algorithm}[tb]
\caption{Client-Side Protocol}
\label{alg:client-side}
\begin{algorithmic}
    \STATE {\bfseries Input:} User feature $X_i\in \mathcal{X}$
    \STATE $t \gets 0$, $U_t \gets \mathcal{X}$ \qquad \COMMENT{Current privacy region for the user}
    \STATE Receive $\varepsilon$ from the server
    \LOOP
        \STATE $t\gets t+1$
        \STATE Recieve $\mathcal{F}_t = \{F_{t}^{j}\}_{j\in [J_t]}$ \qquad  \COMMENT{$\mathcal{F}_t$ is a current frontier}
        \IF{$X_i \not \in \sqcup_{j} F^{j}_t$}
            \STATE \textbf{Return} $U_{t-2}$ \qquad  \COMMENT{If $t=1$, $\sqcup_{j} F^{j}_t = \mathcal{X}$ and $X_i \not \in \sqcup_{j} F^{j}_t$ holds}
        \ENDIF
        \FOR{all $j$: $F^{j}_t \subset U_{t-1}$}

            \STATE $I^{j}_{i} \gets \mathbf{I}\{X_i\in F^{j}_t\}$ \qquad  \COMMENT{Indicator for the $j$-th frontier cell}
            \STATE $Y^{j}_{i} \gets I^{j}_{i} + \xi^j_{i} $, where $\xi_i \sim \mathrm{Lap}(0, \frac{1}{\varepsilon})$
            \IF{$I^{j}_{i} = 1$}
                \STATE $U_t \gets F^j_t$
            \ENDIF
        \ENDFOR
        \STATE Send $Y_i = \{Y^{j}_{t, i}\}_{j \in [J_t]}$ to Server
    \ENDLOOP
\end{algorithmic}
\end{algorithm}

\begin{algorithm}[tb]
\caption{Server-Side Protocol}
\label{alg:server-side}
\begin{algorithmic}
    \STATE {\bfseries Input:} Partitioning scheme $\mathcal{P}$; stopping parameters $Q,\,T$; budget $\varepsilon$ and $x(\delta)$

    \STATE {\bfseries Output:} $\hat{\mathcal{F}}(\X)$
    
    \STATE {\bfseries Initialization}
    \STATE Round number $t \gets 0$; indicator of active cells $a_t \gets (1)$; partition of active cells $\mathcal{F}^{\mathrm{active}}_t \gets \{\mathcal{X}\}$
    \STATE Broadcast $\varepsilon$ to all users
    
    \STATE {\bfseries Adaptive Split}
    \WHILE{$\mathcal{F}_{t}^{\mathrm{active}} \neq \varnothing$ and $t\le T$}
        \STATE $t \gets t + 1$
        \STATE Construct a partition of $\mathcal{F}_{t-1}^\mathrm{active}$:\, $\mathcal{F}_{t} \gets \{F^{j}_t\}_{j \in [J_t]}$ such that $\mathcal{F}_{t}^{\mathrm{active}} = \sqcup_{j \in [J_t]} F^{j}_t$
        \STATE Broadcast $\mathcal{F}_{t}$ to all users
        \STATE Receive responses from active users $\{Y^{j}_{t, i}\}_{i,j}$ from users

        \FOR{$F^{k}_{t-1} \in \mathcal{F}_{t1}^{\mathrm{active}}$} 
        \STATE $n(F^{k}_{t-1}) \gets \# \{i : Y^{j}_{t, i} \text{ received for all } F_t^j \subset F_{t-1}^k\}$ \qquad  \COMMENT{Estimate active users in cells of $\mathcal{F}_{t-1}^{\mathrm{active}}$} 
        \ENDFOR
        
        \STATE $a_{t-1} \gets \underbrace{(1, 1, 1 ...)}_{J_{t-1}~\text{times}}$ 
        \FOR{$F^{j}_t \in \mathcal{F}_t$} \STATE \COMMENT{This for-loop detects active cells in $\mathcal{F}_{t}$ }\,
            \STATE $\hat{n}(F^{j}_t) \gets \sum_{i} Y^{j}_{t, i}$
            \STATE $\Delta^{j}_t \gets \dfrac{1}{\varepsilon} \left(\sqrt{2n(F^{p(j)}_{t-1}) x(\delta)} + 6x(\delta)\right)$ 
         \STATE   \COMMENT{$F^{p(j)}_{t-1}$ is a ``parental'' cell of $F^j_t$, $p(j)\in[J_{t-1}]$}
            \IF{$\hat{n}(F^{j}_t) < Q + \Delta_t$}
                \STATE $a_{t-1}[p(j)] \gets 0$
                \STATE add $F^{p(j)}_{t-1}$ to $\hat{\mathcal{F}}(\X)$
            \ENDIF
        \ENDFOR
        \STATE $\mathcal{F}_t^{\mathrm{active}} \gets \{F_t^j : a_{t-1}[p(j)] = 1\}$
    \ENDWHILE
\end{algorithmic}
\end{algorithm}

\section{Theoretical analysis}
\label{sec:theory}
In this section, we provide the theoretical justification for the adaptive refinement procedure described in Algorithms~\ref{alg:client-side} and ~\ref{alg:server-side}. Specifically, we derive a high-probability concentration bound on the estimation error of the cell counts. This analysis is crucial for determining the correction term $\Delta_t$ (Line 14 of Algorithm~\ref{alg:server-side}), ensuring that the splitting decision is robust to the injected Laplacian noise.
\begin{lemma}
    Fix an arbitrary iteration $t \in [T]$ and let the corresponding frontier be $\mathcal{F}_t = \sqcup_{j\in [J_t]} F^{j}_t$. Fix some cell $F^{j}_t$ with its parent cell $F^{p(j)}_{t-1}$. Let $I_{p(j)}$ be the set of active users in the parent cell. Let $n(F^j_t)$ be the true number of users within cell $F^j_t$ and let $\hat{n}(F^j_t)$ be an estimator (see line 13 in Algorithm~\ref{alg:server-side}). For any $x > 0$ it holds with probability at least $1-e^{-x}$, that
    \[
    \left|n(F^j_t)  - \hat{n}(F^{j}_t) \right| \le \Delta^{j}_t := \frac{1}{\varepsilon}\sqrt{2 n(F^{p(j)}_{t-1})} + \frac{6x}{\varepsilon},
    \] 
    where $n(F^{p(j)}_{t-1}) := |I_{p(j)}|$.
\end{lemma}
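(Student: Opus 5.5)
The plan is to reduce the claim to a tail bound for a sum of i.i.d.\ Laplace variables and then apply a Chernoff argument. First I will identify exactly which users contribute to $\hat n(F^j_t)=\sum_i Y^j_{t,i}$. By Algorithm~\ref{alg:client-side}, a user reports a value for $F^j_t$ only if $F^j_t\subset U_{t-1}$, that is, only if the user is active in the parent cell $F^{p(j)}_{t-1}$. These are precisely the users in $I_{p(j)}$, with $m:=|I_{p(j)}|=n(F^{p(j)}_{t-1})$. Each such user sends $\mathbf{I}\{X_i\in F^j_t\}+\xi^j_i$, and the indicators sum to $n(F^j_t)$. Hence
\[
\hat n(F^j_t)-n(F^j_t)=\sum_{i\in I_{p(j)}}\xi^j_i,\qquad \xi^j_i\stackrel{\text{i.i.d.}}{\sim}\mathrm{Lap}(0,1/\varepsilon).
\]
The set $I_{p(j)}$ is determined by the transcript up to round $t-1$, and the round-$t$ noise is fresh. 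So I will condition on the past and treat $m$ as fixed.

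Next I will bound the tail of $S_m=\varepsilon\sum_i\xi^j_i$, a sum of $m$ standard Laplace variables. Their moment generating function is $\mathbb{E}e^{\lambda\xi}=(1-\lambda^2)^{-1}\le e^{2\lambda^2}$ for $|\lambda|\le 1/2$. This makes each variable sub-exponential with parameters $(\nu^2,b)=(4,2)$. The Chernoff bound optimised over $\lambda\in(0,1/2]$ then gives a Bernstein-type estimate of the form
\[
\P\{|S_m|\ge u\}\le 2\exp\!\Big(-\min\Big\{\frac{u^2}{8m},\frac{u}{4}\Big\}\Big).
\]
Equivalently, with probability at least $1-e^{-x}$ we have $|S_m|\le c_1\sqrt{m(x+\log 2)}+c_2(x+\log 2)$. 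The explicit linear coefficient $6$ in the statement is meant to absorb the constants $c_2$ and the $\log 2$ coming from the two-sided union bound.

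The main obstacle is matching this with the displayed threshold $\sqrt{2m}+6x$. Here the Gaussian part does not grow with $x$: it is the fixed quantity $\sqrt{2m}$, which is exactly one standard deviation of $S_m$. The natural route is to split into regimes. When $x$ is at most a small absolute constant, $c_1\sqrt{m(x+\log2)}\le\sqrt{2m}$ and the bound follows directly. When $m\lesssim x$, the Bernstein term satisfies $\sqrt{mx}\lesssim x$ and is absorbed into $6x$. I would try these two cases first.

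However, I expect the intermediate regime to fail as stated: $x$ a fixed constant larger than about $1$, with $m\gg x^2$. There the central limit theorem gives $\P\{|S_m|>\sqrt{2m}+6x\}\to\P\{|Z|>1\}\approx 0.32$, which exceeds $e^{-x}$. So before completing the proof I will check whether the lemma is intended only in the regimes above, or whether an additional restriction such as $m\le Cx^2$ is implicit in how $\Delta^j_t$ is used in Algorithm~\ref{alg:server-side}. The argument above establishes the displayed inequality exactly in those regimes.
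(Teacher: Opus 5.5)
Your reduction $\hat n(F^j_t)-n(F^j_t)=\sum_{i\in I_{p(j)}}\xi^j_i$ is the paper's first step, and your objection is correct. As displayed, the Gaussian term $\frac{1}{\varepsilon}\sqrt{2m}$ does not grow with $x$, and the lemma is false: for $x=2$ and $m\to\infty$ the failure probability tends to $\P\{|Z|>1\}\approx 0.317>e^{-2}$.

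The paper's proof does not get around this. It records $\|\xi^j_i\|_{\psi_1}=2/\varepsilon$ and $\mathrm{Var}(\xi^j_i)=2/\varepsilon^2$, cites a Bernstein inequality (Corollary~3.1 of \cite{kroshnin2024bernstein}), and then writes down the same threshold. It also does so only for the upper tail of $\sum_i\xi^j_i$, not for the absolute value the lemma asserts. Any Bernstein-type bound has a variance term growing like $\sqrt{x}$. The threshold actually used in Algorithm~\ref{alg:server-side}, $\frac{1}{\varepsilon}\bigl(\sqrt{2n(F^{p(j)}_{t-1})\,x(\delta)}+6x(\delta)\bigr)$, shows that a factor $x$ under the root was dropped. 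So there is no implicit restriction on regimes to recover: in the algorithm $x=x(\delta)$ is fixed while $n(F^{p(j)}_{t-1})$ can be of order $N$. Rather than completing the literal statement case by case, you should prove a corrected version.

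Your own Chernoff computation does exactly that. From $\P\{|S_m|\ge u\}\le 2\exp(-\min\{u^2/(8m),u/4\})$ you get, with probability at least $1-e^{-x}$,
\[
|\hat n(F^j_t)-n(F^j_t)|\le \frac{1}{\varepsilon}\Bigl(\sqrt{8m(x+\log 2)}+4(x+\log 2)\Bigr).
\]
This is a self-contained substitute for the citation, and it handles the two-sided union bound explicitly.

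Merely reinserting $x$ into the paper's formula would not suffice. Since $\mathrm{Var}(\sum_i\xi^j_i)=2m/\varepsilon^2$, the threshold $\frac{1}{\varepsilon}(\sqrt{2mx}+6x)$ at $x=4$ has limiting one-sided failure probability $\P\{Z>2\}\approx 0.023>e^{-4}\approx 0.018$. For the bound to hold for all $x$, the coefficient of $\frac{1}{\varepsilon}\sqrt{mx}$ must be at least $2$, even one-sided.

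Two smaller corrections to your regime analysis:
\begin{itemize}
\item The ``small $x$'' step cannot work with your constants. With $c_1=2\sqrt2$ and $8\log 2>2$, you get $c_1\sqrt{m(x+\log 2)}>\sqrt{2m}$ for every $x>0$.
\item For large $x$ the literal bound already fails once $m$ exceeds a constant multiple of $x$, not $x^2$. The Gaussian exponent for the level $6x$ is about $(6x)^2/(4m)=9x^2/m$, which drops below $x$ for $m>9x$. So a restriction of the form $m\le Cx^2$ would not rescue it.
\end{itemize}
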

\begin{proof}
By construction, $n(F^j_t) =  \sum_{i \in I_{p(j)}} \mathbf{I}\left\{X_i \in F^j_t \right\}$. The estimated number of points $\hat{n}(F^j_t)$ inside $F^j_t$ (see line 15 in Algorithm~\ref{alg:server-side}) is written as 
\begin{align*}
\hat{n}(F^j_t) := \sum_{i \in I_{p(j)}} Y^j_{t, i} = \sum_{i \in I_{p(j)} } \left(\mathbf{I}\left\{X_i \in F^j_t \right\} + \xi^j_i\right),
\end{align*}
where $\xi^j_i \sim \mathrm{Lap}\left(0 , \, \tfrac{1}{\varepsilon}\right)$. To account for the deviation of the estimator $\hat{n}(F^j_t)$, we introduce $\Delta^j_t$. Specifically, we apply
Corollary 3.1 from \cite{kroshnin2024bernstein} for $\sum_{i\in I_{p(j)}} \xi^j_i$. Note that $\|\xi^j_i\|_{\psi_1} =\frac{2}{\varepsilon}$, $\mathrm{Var}(\xi^j_i) = \frac{2}{\varepsilon^2}$, $\mathbb{E}\,\xi^j_i = 0$. So, we get with probability at least $1-e^{-x}$
\[
\sum_{i\in I_{p(j)}}\xi^j_i \le \frac{1}{\varepsilon}\sqrt{2 n(F^{p(j)}_{t-1})} + \frac{6x}{\varepsilon}.
\]
Note that for any $F^t_j$ the proposed procedure reveals $n(F^{p(j)}_{t-1})$ at the step $t$ (see line 12 in Algorithm~\ref{alg:server-side}). 
\end{proof}

Having analyzed the estimation error, we now turn to the formal privacy guarantee. The following theorem establishes that the interactive protocol satisfies the definition of SLDP.
\begin{theorem}[Algorithm is SLDP]
\label{thm:DP}
Let $\mathcal{F}^*(\X)$ be a true unknown data partition w.r.t. some $\mathcal{P}$. Let $\mathscr{M}$ be the randomized mechanism from Algorithms~\ref{alg:client-side} and~\ref{alg:server-side} that estimates $\hat{F}(\X)$. 
Then $\mathscr{M}$ satisfies DP w.r.t. $\mathcal{N}_{F}(\X)$. That is, let $\mathcal{A}$ be the space of all admissible transcripts. For any $\X'$ s.t. $ (\X, \X') \in \mathcal{N}_{F}(\X)$ and measureable set $S \subseteq \mathcal{A}$ it holds
\[
\mathbb{P}(\mathscr{M}(\X
)\in S) \le e^\varepsilon\, \mathbb{P}(\mathscr{M}(\X'
)\in S) + \delta.
\]
\end{theorem}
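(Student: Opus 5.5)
We propose to prove Theorem~\ref{thm:DP} by coupling the executions on $\X$ and $\X'$, where $\X'$ differs from $\X$ only in user $i$, whose record moves from $X_i$ to $X_i' \in U_i^*(\X)$. We split the transcript space into a \emph{good event} $G$ and its complement. On $G$, every estimated split decision is correct in a conservative sense. Concretely, whenever the server refines a cell, all its children truly contain at least $k$ points (taking $Q = k$). As a result, the estimated frontier never refines strictly below the canonical leaf $U_i^*(\X)$. By the preceding Lemma, applied with $x = x(\delta)$ together with a union bound over all (at most $\sum_t J_t$) cells queried in $T$ rounds, we get $\P(G^c) \le \delta$ once $x(\delta)$ is chosen so that $\bigl(\sum_t J_t\bigr) e^{-x(\delta)} \le \delta$. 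The same bound holds for $\X'$. The estimated count exceeds $Q + \Delta_t^j$ only if the true count exceeds $Q$, so on $G$ refinements are always true refinements.

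The key structural step is the following observation. On $G$, as long as the frontier cells visited by user $i$ are strict ancestors of $U_i^*(\X)$ (or $U_i^*(\X)$ itself), $X_i$ and $X_i'$ lie in the same frontier cell. Hence their membership indicators $I_i^j$ coincide, and so does the active-user count $n(F^{p(j)}_{t-1})$. The only cells where the indicators of $X_i$ and $X_i'$ can differ are the children of $U_i^*(\X)$. These are queried in at most one round: the one in which the server tests whether to split $U_i^*(\X)$. On $G$ that test fails, since some child has fewer than $k$ true points, and the user then stops responding. In that round user $i$ sends a vector $(Y^j_i)_j$ whose indicator part differs between $X_i$ and $X_i'$ in at most two coordinates, each by $1$. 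This gives an $\ell_1$ sensitivity of $2$ under $\mathrm{Lap}(1/\varepsilon)$ noise, i.e.\ a $2\varepsilon$ ratio. So we would either halve the noise parameter, using $\mathrm{Lap}(2/\varepsilon)$, or note that in the paper's accounting the per-user report over a partition is treated as one $\varepsilon$-LDP release. We would adopt the former normalization and remark on it. All other users' messages have identical conditional distributions given the common history. Server broadcasts are deterministic functions of past messages, so by the chain rule for densities of interactive transcripts, the likelihood ratio of the full transcript is bounded by $e^{\varepsilon}$.

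The final combination is standard. For measurable $S$,
\[
\P(O(\X)\in S) \le \P(O(\X)\in S\cap G) + \delta \le e^{\varepsilon}\P(O(\X')\in S\cap G) + \delta \le e^{\varepsilon}\P(O(\X')\in S)+\delta .
\]
Here $G$ is defined as a transcript-measurable event, namely that no cell strictly below $U_i^*(\X)$ is ever broadcast. This makes the pointwise density ratio valid on it for both datasets.

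The main obstacle is making $G$ a property of the transcript rather than of the noise, so that the density-ratio argument applies on a common set. It must also be shown that $G$ has probability at least $1-\delta$ under \emph{both} $\X$ and $\X'$. Under $\X'$ the canonical partition may differ, since moving a point can change counts of cells other than those in the path to $U_i^*(\X)$. We would handle this by defining $G$ via the conservative threshold with true counts of $\X$. We then check that counts of all ancestors of $U_i^*(\X)$ and their siblings are unchanged under $\X'$. Only descendants of $U_i^*(\X)$ are affected, and the corresponding concentration bound from the Lemma applies to both datasets.
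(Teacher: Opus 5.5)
Your argument is the paper's proof in different bookkeeping. Your good event is $\{T_{i_0}\le\tau_{i_0}+1\}=\bigsqcup_{t\le\tau_{i_0}+1}\mathcal{A}_t$. The paper likewise shows that the transcript laws coincide on $\mathcal{A}_t$ for $t\le\tau_{i_0}$, pays one round of privacy loss on $\mathcal{A}_{\tau_{i_0}+1}$, and charges $\P\{T_{i_0}>\tau_{i_0}+1\}$ to $\delta$. The paper asserts that last bound ``by construction''; your use of the Lemma is what actually justifies it.

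Your sensitivity remark is correct, and it exposes a real flaw in the paper's step (a), which claims a factor $e^{\varepsilon}$ because ``each round is DP by construction''. In the critical round the user sends a one-hot vector perturbed coordinatewise by $\mathrm{Lap}(1/\varepsilon)$. That release is only $2\varepsilon$-LDP, and a small $\delta$ cannot absorb the excess. Take $X_{i_0}\in C_1$ and $X'_{i_0}\in C_2$ for two children of $U^*_{i_0}(\X)$. The event $\{Y^{C_1}\ge 1,\ Y^{C_2}\le 0\}$ has probability $\tfrac14$ under $\X$ versus $\tfrac14 e^{-2\varepsilon}$ under $\X'$, each multiplied by the common probability of reaching $U^*_{i_0}(\X)$. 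So you must either switch to $\mathrm{Lap}(2/\varepsilon)$ or conclude with $e^{2\varepsilon}$. If you switch, you must rescale $\Delta^j_t$ by the same factor so the Lemma still controls $G^c$. Your other option, appealing to ``the paper's accounting'', is not a valid justification.

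Two repairs remain. First, your final definition of $G$ (``no cell strictly below $U^*_{i_0}(\X)$ is ever broadcast'') is wrong as written. The children of $U^*_{i_0}(\X)$ are broadcast in the very round that tests its split. That event therefore excludes exactly the transcripts your argument handles, and it need not have probability $\ge 1-\delta$. Take $G$ to be the event that $U^*_{i_0}(\X)$ is never split, i.e.\ that no grandchild of it is ever broadcast.

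Second, the argument simplifies. $G^c$ forces the single witness child $C\subset U^*_{i_0}(\X)$, which has $n(C)\le k-1$, to clear the threshold $Q+\Delta$. For $Q\ge k-1$, the one-sided Lemma applied to that one cell gives $\P(G^c)\le e^{-x(\delta)}$. Your $Q=k$ is fine here; the paper leaves this assumption implicit. The union bound is unnecessary; moreover, it runs over the random quantity $\sum_t J_t$, which would need a deterministic replacement anyway. The analysis under $\X'$ is also unnecessary, since the one-sided inequality in the statement only uses $\P_{\X}(G^c)\le\delta$.
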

\begin{proof}
    The protocol is executed in rounds. Let $t_i \in \mathbb{N}$ be the number of rounds when the user $i$ is active. Let the full transcript of users' messages to the server be $o$,
\[
o = \left( (y^{j}_{k,i})_{j \in J_k} \right)_{i \in [N],\, 1 \le k \le t_i},
\]
where $J_k$ indexes the set of frontier cells queried in round $k$, and $y^{j}_{k,i}\in\mathcal Y$ being the responce of $i$-th user at $k$-th round regarding the frontier cell $j \in J_{k}$.

From now on, we fix some user index $i_0 \in [N]$. Note that
$\mathcal{A}$ admits the decomposition $\mathcal{A} := \sqcup_{t} \mathcal{A}_t$, where $\mathcal{A}_t := \{o \in \mathcal{A}\,|\, t_{i_0} = t \}$, i.e., $\mathcal{A}_t$ is the set of all possible communication transcripts where the user $i_0$ is active up to round $t$. 
Next, we introduce a transcript decomposition by time: for a round index $\tau \ge 0$ and a transcript $o$, we denote
\begin{gather*}
o_{\le \tau} = \left( (y^{j}_{k,i})_{j \in J_k} \right)_{i \in [N],\, 1 \le k \le \min(t_i, \tau)}, \quad
o_{> \tau} = \left( (y^{j}_{k,i})_{j \in J_k} \right)_{i \in [N]: t_i > \tau,\, k > \tau}.
\end{gather*}
In other words, $o_{\le \tau}$ is the transcipt up to moment $\tau$, and $o_{> \tau}$ is the transcript after $\tau$.
So, we write $o = (o_{\le \tau}, o_{> \tau})$.

Further, let the observed (random) communication log be 
\[
\mathscr{M}(\X) := O = \left( (Y^{j}_{k,i})_{j \in J_k} \right)_{i \in [N],\, 1 \le k \le T_i},
\]
where $T_i$ is the observed time of communication interruption for any user $i$. 

Consider a neighboring dataset $\X'$ s.t. some $X'_{i_0} \in U^*_{i_0}(\X)$ replaces $X_{i_0}$. Let the corresponding observed communication transcript be $\mathscr{M}(\X') := O'$. Denote as $\tau_{i_0}$ the ideal stopping time of the communication of the user $i_0$, i.e.\ such that $U^*_{i_0}(\X) = F_{\tau_{i_0}}^j$ for some $j$. Since $X'_{i_0} \in U^*_{i_0}(\X)$, the sequence of indicator values generated by the user in Algorithm~\ref{alg:client-side} is identical for both $X_{i_0}$ and $X'_{i_0}$ for any sequence of server queries leading to the leaf covering $U^*_{i_0}(\mathbf{X})$ before the stopping time $\tau_{i_0}$. Thus, we get for any $t\le \tau_{i_0}$
\begin{equation}
\label{eq:communication_seimilarity}
O_{\le t} \overset{d}{=} O'_{\le t}, \quad 
\mathbb{P} \left\{O \in S  \cap \mathcal{A}_t \right\} = \mathbb{P} \left\{O' \in S  \cap \mathcal{A}_t \right\} ~.
\end{equation}
Now, we consider the case $t = \tau_{i_0} + 1$. We set 
\[
p\left(O_{\le t} \right) := \mathbb{P} \left\{(O_{\le t},  O_{> t}) \in S  \cap \mathcal{A}_t \middle| O_{\le t} \right\}. 
\]
Note, that
\begin{align*}
O_{\le t} = \left(O_{< t}, (Y^j_{t, i})_{i: t_i \ge t, j \in J_t}\right) = \left(O_{< t}, (Y^j_{t, i})_{\black{i \in [N]~\text{s.t.} i\neq i_{0}}, j \in J_t}, (Y^j_{t, i_0})_{j \in J_t}\right). 
\end{align*}
And let's denote for simplicity
\[
\tilde{O}_{t, i\neq i_0} := (Y^j_{t, i})_{\black{i \in [N]~\text{s.t.} i\neq i_{0}}, j \in J_t} , \quad \tilde{O}_{t, i_0} := (Y^j_{t, i_0})_{j \in J_t}.
\]
Consequently, $p(O_{\le t})$ can be written as
\[
p\left(O_{\le t} \right) = p\left(O_{< t}, \tilde{O}_{t, i\neq i_0}, 
\tilde{O}_{t, i_0} \right) \ge 0.
\]
By analogy with the observed user responses at round $t$ in dataset $\X$, we define a corresponding term for the neighboring dataset $\X'$ as follows
\[
\tilde{O}'_{t, i\neq i_0} := ({Y'}^j_{t, i})_{\black{i \in [N]~\text{s.t.} i\neq i_{0}}, j \in J_t} , \quad \tilde{O}'_{t, i_0} := ({Y'}^j_{t, i_0})_{j \in J_t}.
\]
Note that by construction of the procedure, it holds
\begin{equation}
    \label{eq:nieghb_communication_at_t}
    \tilde{O}_{t, i\neq i_0} \overset{d}{=} \tilde{O}'_{t, i\neq i_0}.
\end{equation}
Now we consider
\begin{align*}
   &\mathbb{P} \left\{O \in S  \cap \mathcal{A}_t \right\}  = \mathbb{E}\, p\left(O_{\le t} \right)
= \mathbb{E}\,\mathbb{E}\,
   \left[
    p\left(O_{< t}, (\tilde{O}_{t, i\neq i_0}, \tilde{O}_{t, i_0}) \right) \middle| O_{< t}, \tilde{O}_{t, i\neq i_0}
   \right] \\
   & \overset{(a)}{\le} \mathbb{E}\, \black{e^\varepsilon} \, \mathbb{E}\, \left[p\left(O_{< t}, (\tilde{O}_{t, i\neq i_0}, \tilde{O}'_{t, i_0}) \right) \middle | O_{< t}, \tilde{O}_{t, i\neq i_0} \right ] \overset{(b)}{\le} \mathbb{E}\, \black{e^\varepsilon} \, \mathbb{E}\, \left[p\left(O'_{\le t} \right) \middle | O'_{< t}, \tilde{O}'_{t, i\neq i_0} \right ]=\black{e^\varepsilon} \,\mathbb{P} \left\{O' \in A_t \right\};
\end{align*}
(a) holds because we fix $O_{< t}$ and $\tilde{O}_{t, i\neq i_0}$ and use \black{the fact that each round of the procedure is DP by construction,} so $\mathbb{E}\, f\left(\tilde{O}_{t, i_0}\right) \le \black{e^\varepsilon} \, \mathbb{E}\, f\left(\tilde{O}'_{t, i_0}\right)$; (b) holds because $t = \tau_{i_0} + 1$ and, consequently, $O_{< t} \overset{d}{=} O'_{< t}$, see \eqref{eq:communication_seimilarity} and \eqref{eq:nieghb_communication_at_t}.

Finally, we consider $ t > \tau_{i_0} + 1$. By construction of $O$, it holds
\begin{align*}
\sum_{t>\tau_{i_0} +1 } \P\left\{ O \in S  \cap \mathcal{A}_t \right\}  =  \mathbb{P}\, \left\{ O \in \sqcup_{t>\tau_{i_0} +1} S  \cap \mathcal{A}_t \right\} = \mathbb{P}\, \left\{ T_{i_0} > \tau_{i_0} + 1 \right\} \le \delta.
\end{align*}
The same holds for $O'$. Taking into account all the aforementioned, we get for any $S \subseteq \mathcal{A}$  
\begin{align*}
    & \P \left\{ O \in S \right\}  = \sum_{t} \left\{ O \in S  \cap \mathcal{A}_t\right\} = \sum^{\black{\tau_{i_0}}
    }_{t = 1} \P\left\{ O \in S  \cap \mathcal{A}_t\right\} + \P\left\{ O \in  S  \cap \mathcal{A}_{\tau_{i_0}+1}\right\}  + \sum_{t>\tau_{i_0} +1 } \P\left\{ O \in  S  \cap A_t \right\} \\
     &\le \sum^{\tau_{i_0}
    }_{t = 1} \P \left\{ O' \in  S  \cap \mathcal{A}_t\right\} + e^{\varepsilon} \P\left\{ O' \in  S  \cap \mathcal{A}_{\tau_{i_0}+1}\right\}  + \delta 
     \le e^{\varepsilon} \P\left\{ O' \in S \right\} + \delta . 
\end{align*}
\end{proof}

\begin{remark}
\label{remark:budget}
For any two neighboring datasets $\X, \X'$, s.t. $\X' \in \mathcal{N}_{F}(\X)$ and any refinement query at depth $t < \mathrm{depth}(U^*_i(\mathbf{X}))$, the user responses follow identical distributions (see eq.~\eqref{eq:communication_seimilarity}). Thus, the privacy loss accumulates only at the final stage of the partitioning. So, our framework allows for constructing partitions of arbitrary depth without effectively consuming the privacy budget during the intermediate steps.
\end{remark}

\section{Experiments}
\label{sec:experiments}
To demonstrate the performance of the algorithm, we consider several examples: mean estimation of a function, classification, and answering spatial queries. To facilitate reproducibility, we provide a self-contained code implementation using synthetic data (see \url{https://github.com/asuvor/privacy}). This allows for immediate execution without the overhead of the complex preprocessing pipeline required for real-world datasets.

\subsection{Mean estimation of a function}
We evaluate the performance of our SLDP against central DP and LDP baselines on a synthetic dataset. The task is to estimate the population mean of the squared norm $f(x) = \|x\|^2$ over a 2D domain.

The construction of corresponding mean estimators are as in Example~\ref{example:DP} (central DP), Example~\ref{example:LDP} (LDP) and Example~\ref{example:SLDP} (SLDP). To construct an SLDP estimator, we split the budget $\varepsilon$: the construction of the privacy regions costs $\tfrac{1}{2}\varepsilon$, and the mean estimation costs $\tfrac{1}{2} \varepsilon$. Data points are sampled from a normal distribution $\mathcal{N}(0, 1.5^2)$ truncated to the bounding box $[-10, 10]^2$ via rejection sampling. We vary the sample size $N \in \{2, 16, 32, 64\}\cdot$ and the total privacy budget $\varepsilon \in \{0.5, 1.0, 2.0, 4.0, 8.0, 12.0\}$. To ensure statistical robustness, we perform $60$ independent simulations for each configuration $(N, \varepsilon)$ and report the Mean Squared Error (MSE) with corresponding confidence intervals. In all experiments $\delta = 0.05$. Fig.~\ref{fig:mse_and_boxplots} demonstrates the results.

\begin{figure*}[!t]
    \centering
    \begin{subfigure}{\textwidth}
        \centering
        \includegraphics[width=\textwidth]{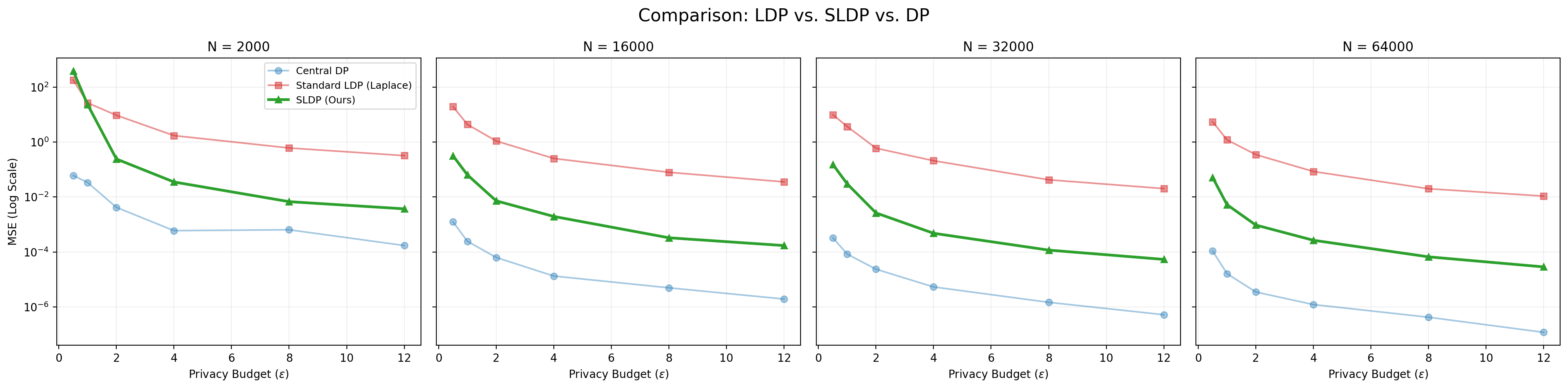}
        \caption{Mean Squared Error (MSE) comparison on synthetic 2D data across varying sample sizes $N$) and privacy budgets $\varepsilon$. SLDP (green) outperforms Standard LDP (pink).}
        \label{fig:mse_results_a}
    \end{subfigure}

    \vspace{0.6em}

    \begin{subfigure}{\textwidth}
        \centering
        \includegraphics[width=\textwidth]{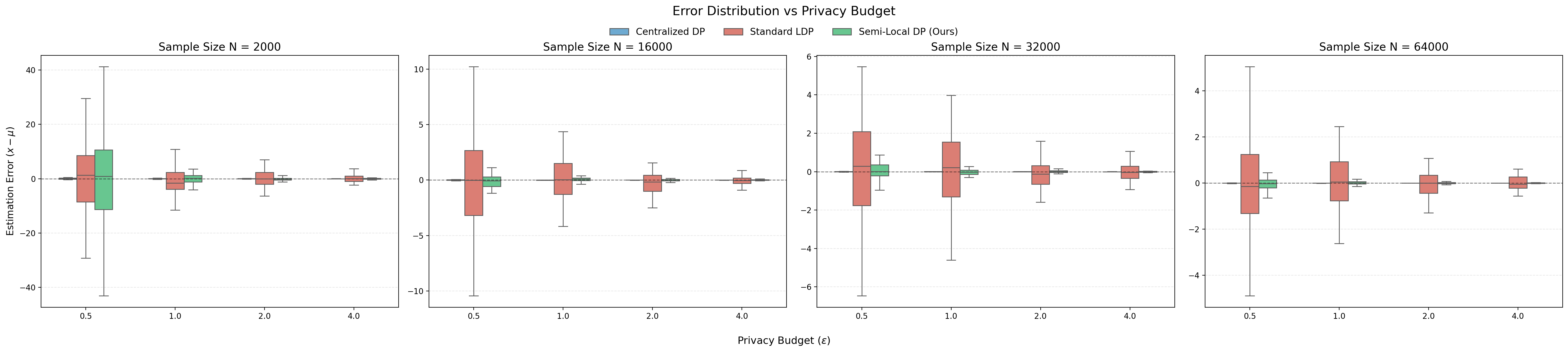}
        \caption{Distribution of estimation errors for SLDP compared to Standard LDP and Central DP across different budgets.}
        \label{fig:error_boxplots_b}
    \end{subfigure}

    \caption{\textbf{Mean estimation performance under privacy.}
    (a) Convergence comparison for Centralized DP, standard LDP, and SLDP.
    (b) Error distributions across $N$ and $\varepsilon$.}
    \label{fig:mse_and_boxplots}
\end{figure*}

\subsection{Classification on California Housing dataset}
\label{sec:exp_clust}
In this section, we empirically evaluate the privacy-utility trade-off of our SLDP against established baselines. 

We utilize the California Housing dataset~\cite{pace1997sparse} as provided by \cite{pedregosa2011scikit}. For the input features, we retain only the geospatial coordinates (latitude and longitude), scaled to the unit square $[0, 1]^2$, such that $X_i \in [0, 1]^2$.
The continuous target variable (median house value) is binarized to define a classification task. Specifically, we assign labels based on the global median value of the dataset: we set $y_i=1$ if the local house value exceeds the global median, and $y_i=0$ otherwise (see Fig.~\ref{fig:housing_data}). We compare four distinct mechanisms:
\begin{itemize}
    \item SLDP-Quantization (Ours): A quadtree-based mechanism where data points are mapped to the centroids of privacy-preserving reiongs. 
    \item SLDP-Split (Ours): A variant where the privacy budget $\varepsilon$ is split evenly ($50/50$). Half of the budget is allocated to the partition construction. The remaining half is used to generate synthetic points: each real point is replaced by a sample drawn from a Laplace distribution centered at the geometric centroid of its privacy region, with the noise scale proportional to its size.
    \item Standard LDP: Independent Laplace noise ($1/\varepsilon$) is added to each coordinate, followed by clipping to the domain boundaries.
    \item Geo-Indistinguishability: The Planar Laplace Mechanism \cite{andres2013geo}. Noise is generated in polar coordinates to satisfy $\varepsilon$-geo-indistinguishability.
\end{itemize}
We use Random Forest, kNN, and Logistic Regression. The data is split into training ($70\%$) and testing ($30\%$) sets. The privacy mechanisms are applied only to the training features. The testing set remains unperturbed. Fig. \ref{fig:classifiers} reports the  F1-Score averaged over 60 independent runs for $\varepsilon \in [0.1, 10]$. 

\paragraph{Hyperparameter Sensitivity.} The structural parameter $k$ governs our SLDP mechanism. We analyze analyze the classificatin performance depending on the choice of $k$; see Appendix~\ref{appx:classification}.
\begin{figure*}[!t]
    \centering
\begin{subfigure}[b]{0.3\linewidth}
    \includegraphics[width = 0.95\linewidth]{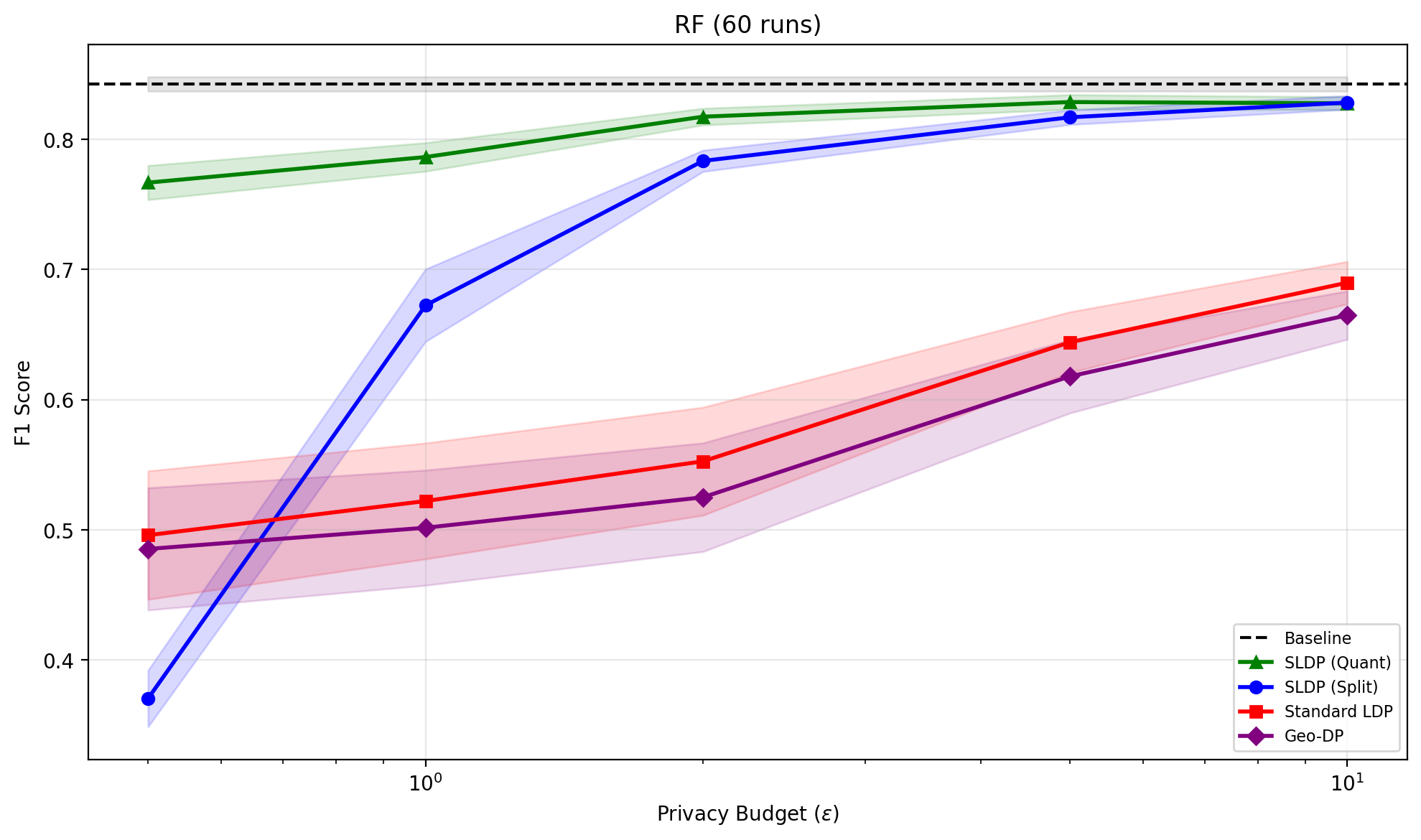}
\end{subfigure}
\begin{subfigure}[b]{0.3\linewidth}
    \includegraphics[width = 0.95\linewidth]{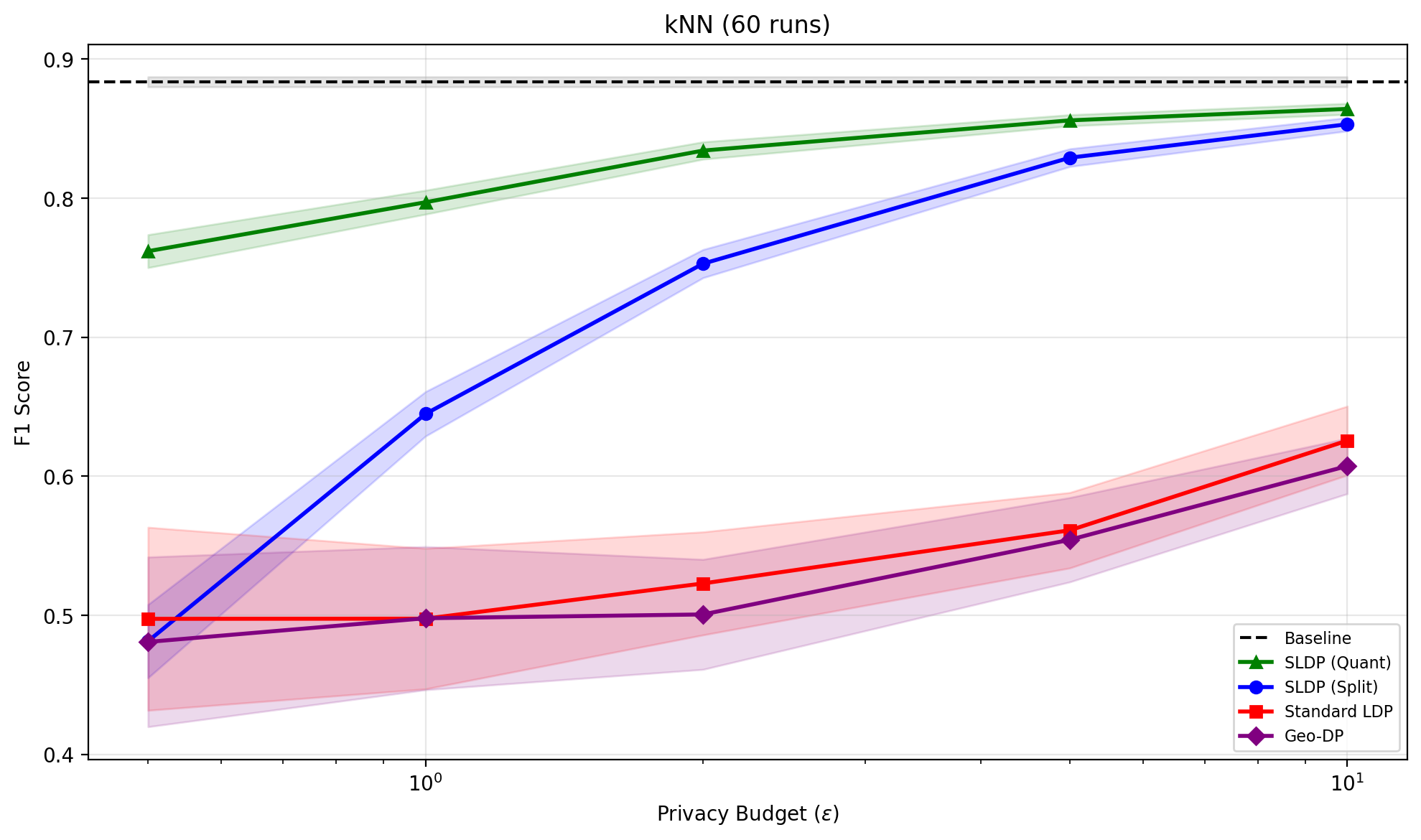}
\end{subfigure}
\begin{subfigure}[b]{0.3\linewidth}
    \includegraphics[width = 0.95\linewidth]{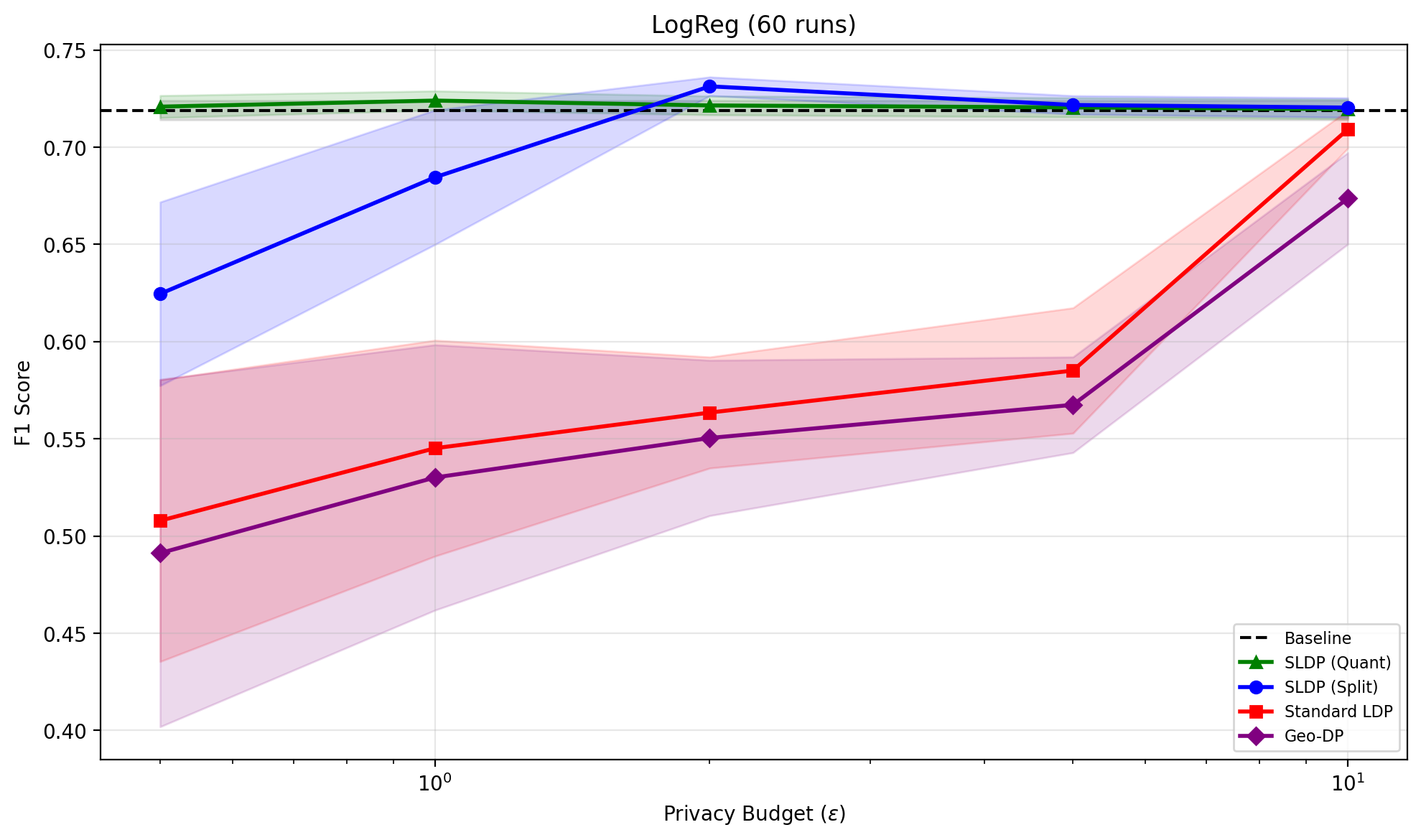}
\end{subfigure}
\caption{ Classification utility (F1-Score) on the California Housing dataset. Comparison of predictive performance across Random Forest, k-NN, and Logistic Regression classifiers trained on data perturbed by SLDP (Ours), Standard LDP, and Geo-Indistinguishability mechanisms. Results are averaged over 60 independent runs with varying privacy budgets $\varepsilon \in [0.1, 10]$. Shaded areas indicate the standard deviation.}
\label{fig:classifiers}
\end{figure*}

\subsection{Answering spatial queries}
We consider $N$ user locations in a normalized 2D domain $[0,1]^2$. We focus on answering a set of rectangular range queries $\mathcal{Q}$. A single query $q \in \mathcal{Q}$ is defined by a rectangular region $R_q = [x_{\min}, x_{\max}] \times [y_{\min}, y_{\max}]$. The true answer to the query, denoted as $y_q$, is the count of users located within the region: $y_q = |\{p \in D \mid p \in R_q\}|.$ We generate a workload of $M=200$ spatial range queries using an anchored multi-scale strategy (see Appendix~\ref{appx:spatial_questions}). We evaluate our method on four real-world geospatial datasets covering diverse mobility patterns. 
Gowalla and Brightkite \cite{cho2011friendship} represent sparse location-based social network (LBSN) check-ins, while Geolife \cite{zheng2009mining} and Porto Taxi \cite{moreira2015taxi} consist of GPS trajectories collected from human and vehicular traffic.

We compare our SLDP against two reference methods: PrivTree (Central DP)~\cite{zhang2016privtree} and LDP-KDTree (Local DP)~\cite{alptekin2025hierarchical}. The choice of hyperparameters for PrivTree is as recommended by ~\cite{zhang2016privtree} (see \eqref{eq:param_priv_tree})

All algorithms produce a set of disjoint cells $\mathcal{F} = \{F_1, \dots, F_k\}$, where each cell $F_i$ isa rectangular region. It holds a noisy count $\hat{c}_i$ of the users within it. The estimated count $\hat{y}_Q$ for a query $Q$ is computed as the weighted sum of counts from all intersecting leaves:
\[
\hat{y}_Q = \sum_{F_i \in \mathcal{L}} \hat{c}_i \cdot \frac{\text{Area}(F_i \cap Q)}{\text{Area}(F_i)}
\]
To assess the performance of each method, we compute private count estimates $\hat{y}$, we compute the Mean Relative Error (MRE) with a smoothing threshold $\tau$:
\[
\text{MRE} = \frac{1}{|Q|} \sum_{q \in Q} \frac{|\hat{y}_q - y_q|}{\max(y_q, \tau)},\]
where $\tau = 10\max(1, 10^{-4}N)$ is used to mitigate the impact of small counts on the error metric. For statistical robustness, all results are reported as averages over 60 independent trials. The experimental procedure for each trial is as follows: (i) a random subset of $N$ users (e.g., $N \in \{5k, 20k\}$) is sampled from the full dataset; (ii) a new set of anchored queries is generated based on the sampled subset; (iii) the privacy algorithms are executed with varying privacy budgets; (iv) errors are computed for the batch of queries and averaged across the 60 repetitions. Fig.~\ref{fig:spatial_queries_error} provides the results.
\begin{figure}[!t]
    \includegraphics[width=\linewidth]{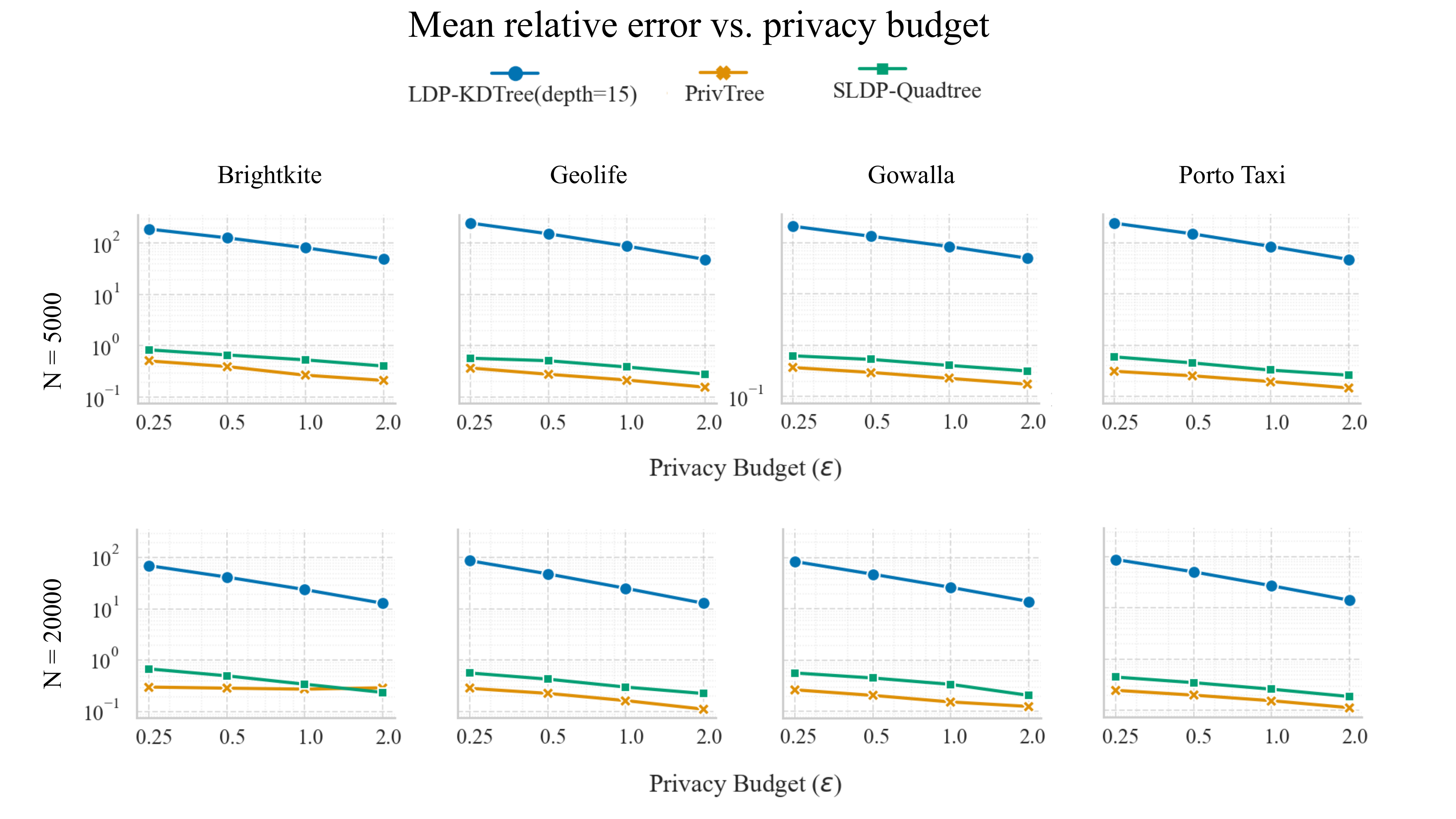}
    \caption{\textbf{Accuracy of spatial range queries on real-world datasets.} Mean Relative Error (MRE) vs. privacy budget ($\varepsilon$) for Brightkite, Geolife, Gowalla, and Porto Taxi datasets for ($N\in\{5\cdot 10^3, 20\cdot 10^3\}$).}
    \label{fig:spatial_queries_error}
\end{figure}

\bibliographystyle{unsrt}  
\bibliography{references}

\appendix
\newpage
\section{Experiments}

\subsection{Classification}
\label{appx:classification}
\begin{figure}[!ht]
    \centering
    \includegraphics[width=0.5\linewidth]{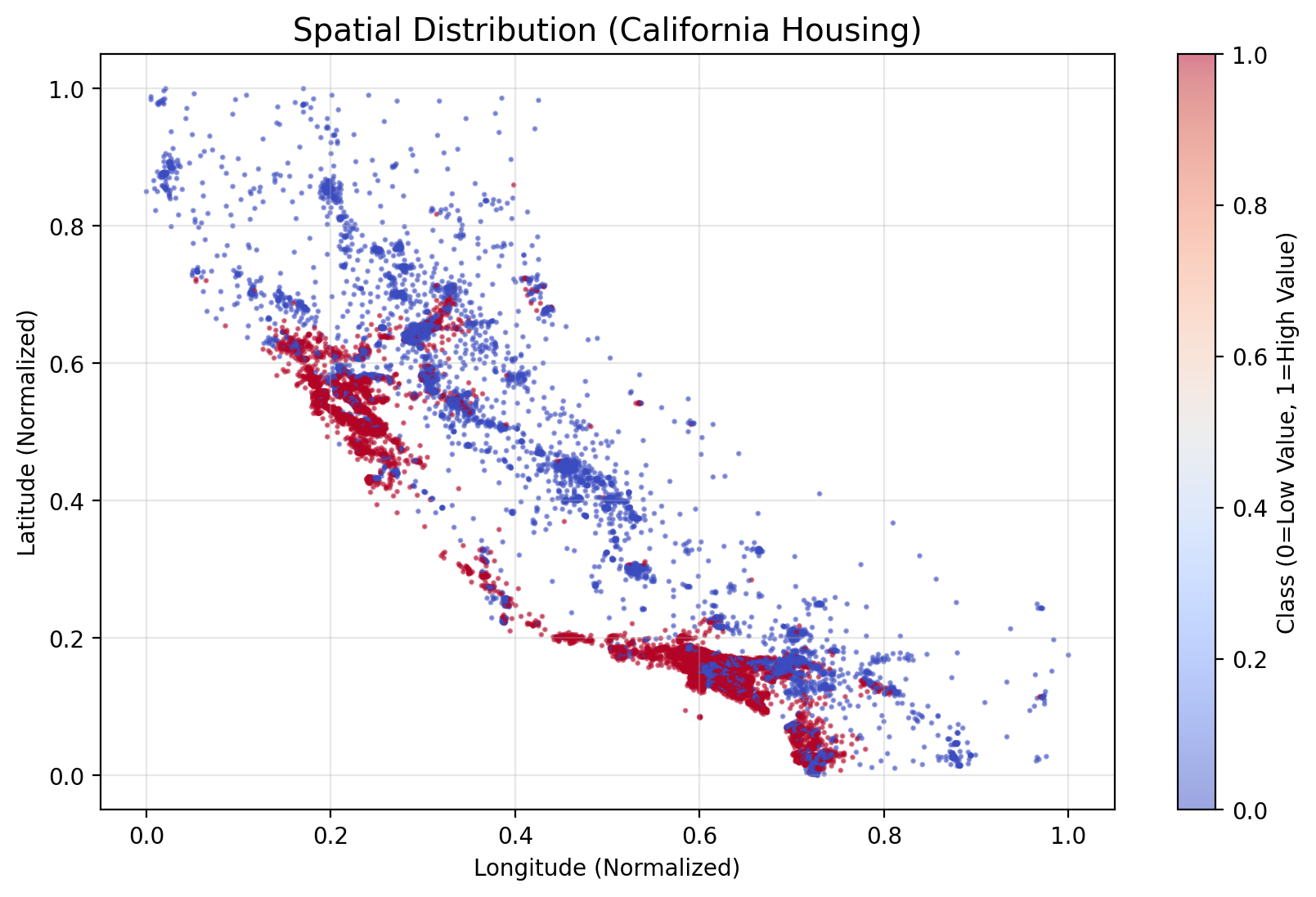}
    \caption{The plot illustrates the binary classification task: predicting whether a house value is above (red, Class 1) or below (blue, Class 0) the global median based on geospatial coordinates ($X_i \in [0, 1]^2$). }
    \label{fig:housing_data}
\end{figure}

\subsubsection{Hyperparameter Sensitivity Analysis} We keep the privacy budget fixed at $\varepsilon=1.0$ and vary $k$. For each value of $k$, we execute $N=50$ independent runs to capture the stochasticity of the mechanism.

\begin{figure}[!h]
    \centering
    \includegraphics[width=0.7\textwidth]{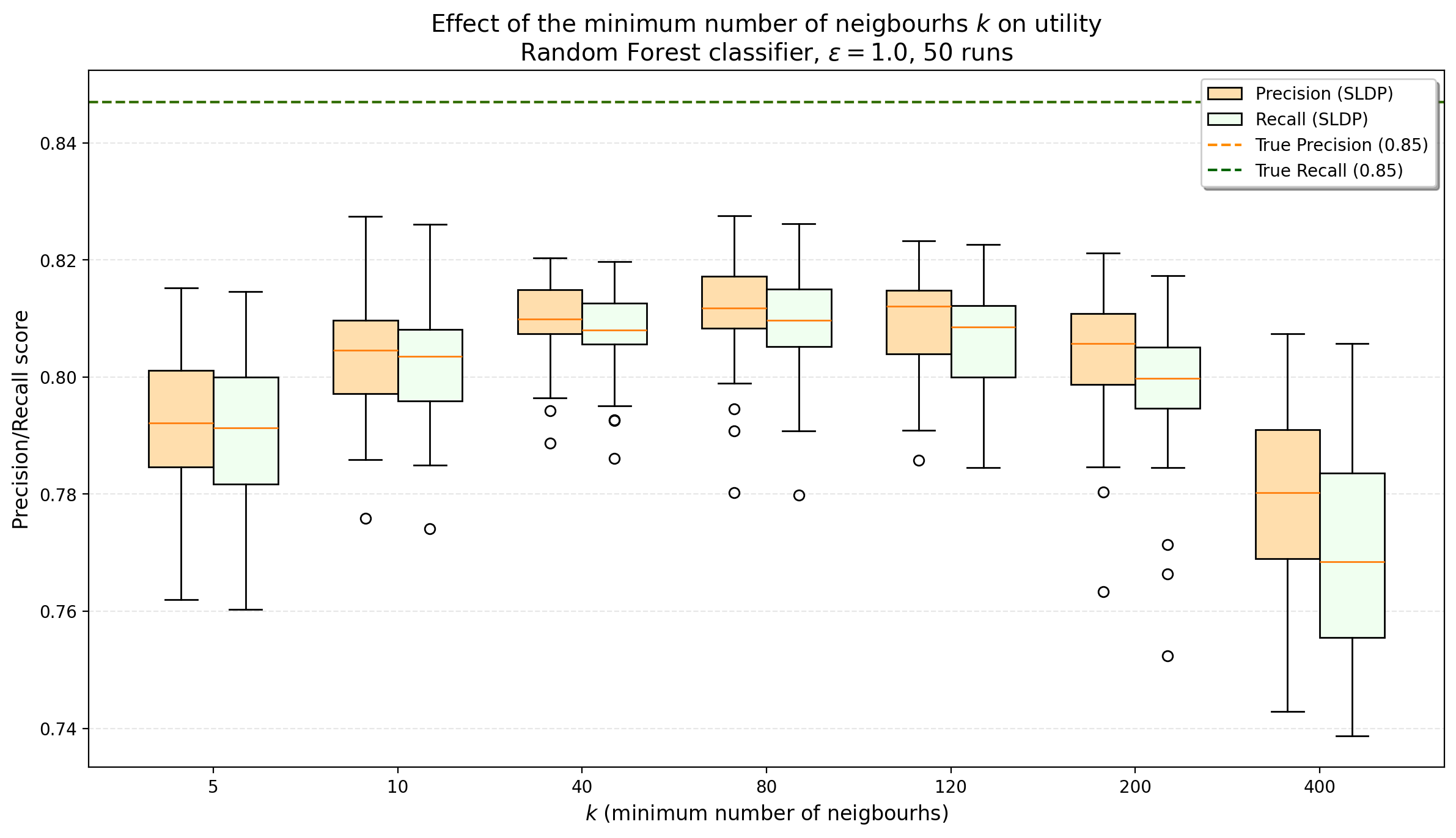}
    \caption{\textbf{Hyperparameter sensitivity analysis in SLDP.} Hyperparameter sensitivity analysis in SLDP. Distribution of Precision and Recall of a Random Forest classifier trained on SLDP-perturbed data ($\varepsilon=1.0$) as a function of the minimum number of neighbors $k$ required per privacy region.}
    \label{fig:RF_precision_recall}
\end{figure}

In experiments in Section~\ref{sec:exp_clust}, we set $k = 20$. The parameters of classifiers are as follows:
\begin{itemize}
    \item Random Forest (RF): ($N_{estimators}=50$, max depth=8).
    \item k-NN: ($k=15$, distance-weighted).
\end{itemize}

\subsection{Answering spatial queries}
\label{appx:spatial_questions}
We generate a workload of $M=200$ spatial range queries using an anchored multi-scale strategy. Unlike uniform random queries, which often result in empty regions on sparse spatial data, our approach ensures meaningful selectivity:
\begin{itemize}
    \item \textbf{Anchoring}: Each query rectangle is generated by first sampling a random data point $(c_x, c_y)$ from the dataset to serve as an anchor.
    \item \textbf{Size:}  A width $w$ and height $h$ are sampled from a log-uniform distribution (ranging from $1/256$ to $1/8$ of the domain size) to cover various spatial scales. The query rectangle is positioned such that it contains the anchor point.

    \item \textbf{Selectivity:}  We filter the generated rectangles to retain only those with a ground truth count $y_{true}$ within a specific range (e.g., $[20, 0.05N]$), discarding overly sparse or non-selective queries.
\end{itemize}

Fig.~\ref{fig:spatial_queries_example} provides a visualization of the four real-world geospatial datasets with a subsample of anchored spatial queries.
\begin{figure}[!ht]
    \centering
\includegraphics[width=\linewidth]{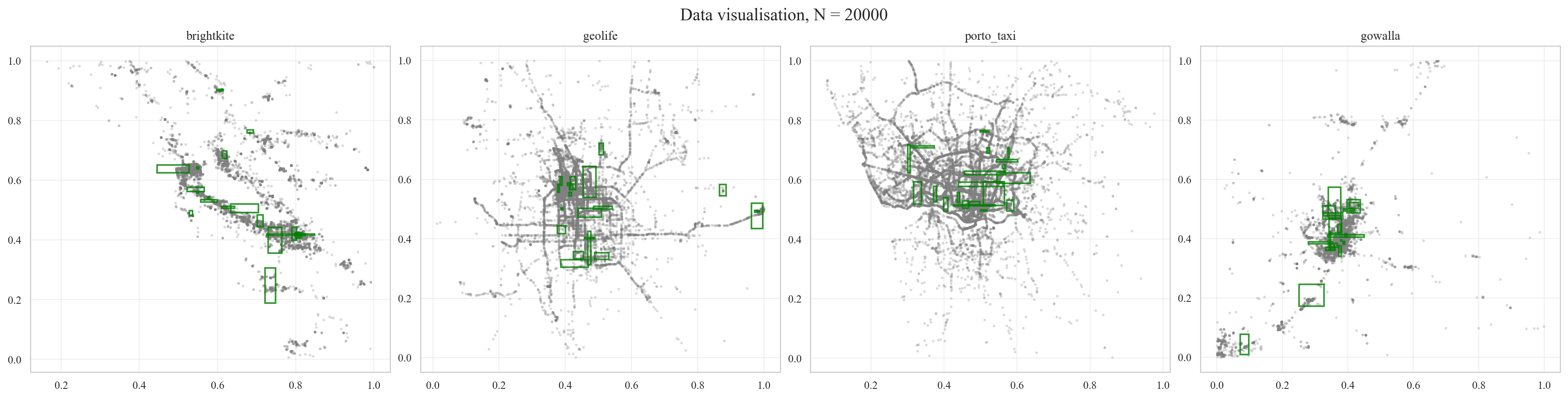}
    \caption{\textbf{Visualization of the four real-world geospatial datasets.} Each subplot displays a sample of $N=20,000$ data points from Gowalla, Brightkite, Geolife, and Porto Taxi, overlaid with representative spatial range queries (green boxes) generated by the anchored multi-scale strategy.}
    \label{fig:spatial_queries_example}
\end{figure}

\subsubsection*{Synthetic data}
To simulate spatial distributions, we generate synthetic clustered datasets within the unit square domain $[0, 1]^2$. The data generation process involves creating $N$ samples distributed around randomly positioned cluster centers, mimicking the non-uniform density often found in geospatial applications. We vary the dataset size $N \in \{2,000, \dots, 50,000\}$ to evaluate the asymptotic behavior of the algorithms. 

In this section, we generate a workload of $N_{q}$ spatial range queries using a uniform random strategy. For each query, the width $w$ and height $h$ are sampled independently from a uniform distribution $U(s_{min}, s_{max})$. The position of the rectangle (defined by its top-left corner) is then sampled uniformly from the valid domain such that the query remains fully contained within the bounds $[0, 1]^2$. Fig.~\ref{fig:spatial_queries_synthetic} visualizes a dataset with several spatial queries.

\begin{figure}[!ht]
    \centering
    \includegraphics[width=0.4\textwidth]{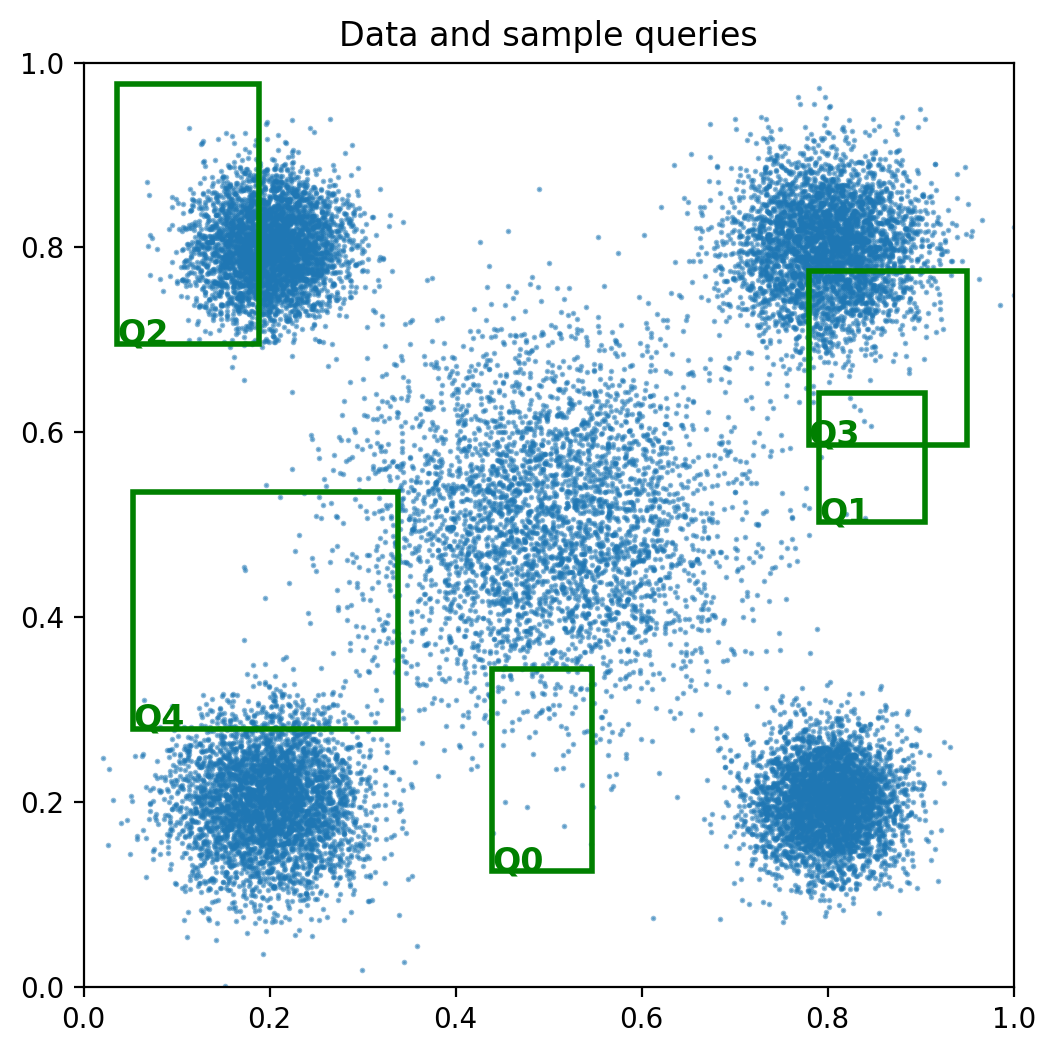}
    \caption{Data come from a Gaussian mixture. Rectangles correspond to spatial queries.}
    \label{c}
    \label{fig:spatial_queries_synthetic}
\end{figure}

Finally, we compare our SLDP method against PrivTree (central DP) \cite{zhang2016privtree} and LDP-KDTree (Local DP)~\cite{alptekin2025hierarchical}  (see Fig.~\ref{fig:comparision_synthetic}). In this setting, we use $k = 20$ for SLDP. To run Privtree, we use the parameters recommended in the original paper: 
\begin{equation}
\label{eq:param_priv_tree}
    \lambda = 7 / (3 \varepsilon_{\mathrm{tree}}), \quad \delta = \lambda \ln(4), \quad \theta=0,
\end{equation}
where $\varepsilon_{\mathrm{tree}}$ is the privacy budget spent to the tree construction.

\begin{figure}[!ht]
    \centering
    \includegraphics[width=\textwidth]{ 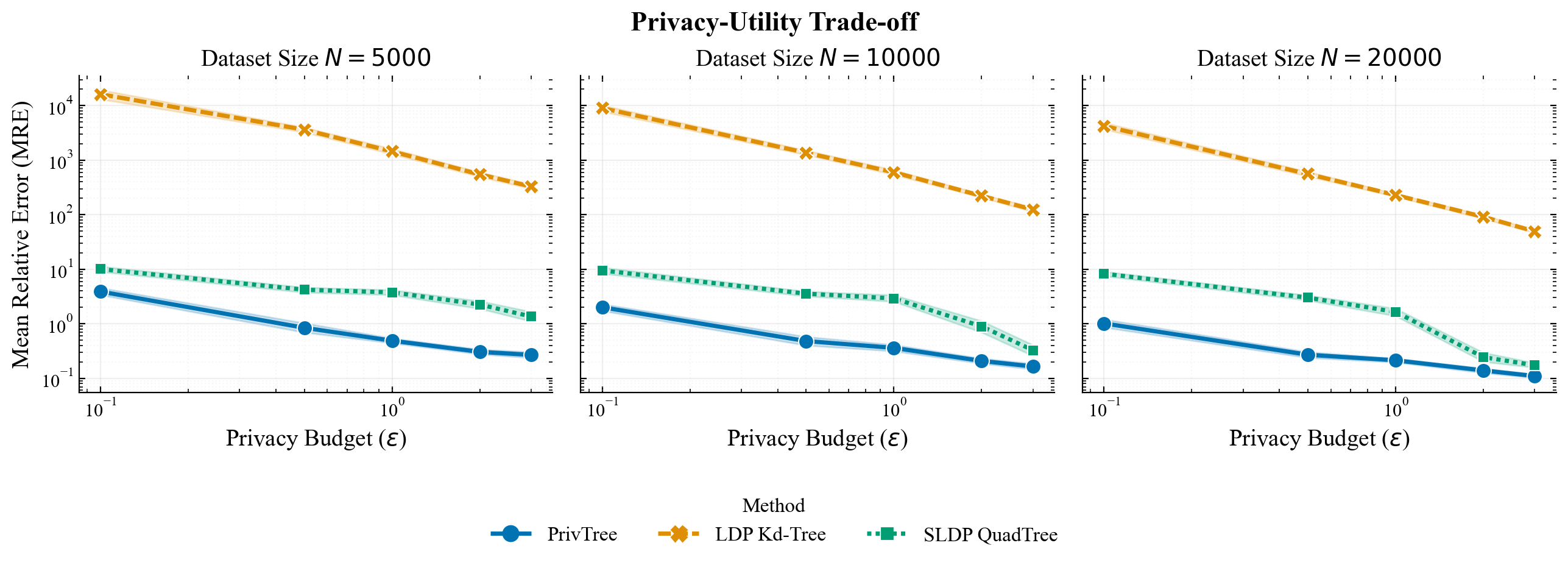}
    \caption{\textbf{Accuracy Analysis.} Evaluation of the reconstruction error (MRE) on synthetic datasets. Columns correspond to increasing dataset sizes $N$.}
    \label{c}
    \label{fig:comparision_synthetic}
\end{figure}

\end{document}